\documentclass{article}


\usepackage[preprint]{neurips_2024}




\usepackage[utf8]{inputenc} 
\usepackage[T1]{fontenc}    
\usepackage{hyperref}       
\usepackage{url}            
\usepackage{booktabs}       
\usepackage{amsfonts}       
\usepackage{nicefrac}       
\usepackage{microtype}      
\usepackage{xcolor}         

\usepackage{amsmath}
\usepackage{amssymb,amsthm}

\newcommand{\mbE}{\mathbb{E}}

\newcommand{\mbL}{\mathbb{L}}

\newcommand{\mcL}{\mathcal{L}}
\newcommand{\mcM}{\mathcal{M}}

\newcommand{\mcP}{\mathcal{P}}
\newcommand{\mbR}{\mathbb{R}}
\newcommand{\mcS}{\mathcal{S}}
\newcommand{\mcX}{\mathcal{X}}


\theoremstyle{plain}
\newtheorem{theorem}{Theorem}[section]

\newtheorem{lemma}[theorem]{Lemma}
\newtheorem{corollary}[theorem]{Corollary}
\newtheorem{example}{Example}

\theoremstyle{definition}
\newtheorem{definition}[theorem]{Definition}

\theoremstyle{remark}

\newtheorem*{proofsketch}{Proof Sketch}
\title{Rao Differential Privacy}

%

\author{Carlos -. Soto \\
Department of Mathematics and Statistics\\
University of Massachusetts Amherst\\
Amherst, MA 01006, USA \\
\texttt{carlossoto@umass.edu} 
}

\begin{document}

\maketitle

\begin{abstract}
  Differential privacy (DP) has recently emerged as a definition of privacy to release private estimates. DP calibrates noise to be on the order of an individuals contribution. Due to the this calibration a private estimate obscures any individual while preserving the utility of the estimate. Since the original definition, many alternate definitions have been proposed. These alternates have been proposed for various reasons including improvements on composition results, relaxations, and formalizations. Nevertheless, thus far nearly all definitions of privacy have used a divergence of densities as the basis of the definition. In this paper we take an information geometry perspective towards differential privacy. Specifically, rather than define privacy via a divergence, we define privacy via the Rao distance. We show that our proposed definition of privacy shares the interpretation of previous definitions of privacy while improving on sequential composition.
\end{abstract}

\section{Introduction and Motivation}
In the past few decades data has become increasingly accessible. This accessibility has come hand in hand with concerns on the privacy practices in place on the security of ones data. ``Differential Privacy" (DP) \citep{dwork2006calibrating} is a statistical definition of privacy that has become a de facto framework for protection of data. DP is compelling, among other reasons, as the amount of noise injected into the summary of interest is on the order of an individuals contribution. This property of DP lends itself to the interpretation of the concealment of any particular individual while preserving the utility of the sample statistic.

DP is not a particular procedure but rather an attribute of a random mechanism. A mechanism is, generally, a probability density or algorithm where the scale parameter is chosen in such a way to satisfy the definition of DP. Loosely speaking, DP states that the mechanism is not too different from itself under adjacent datasets where and adjacent datasets are datasets of the same size but with exactly one observation replaced by an arbitrary observation. Due to its simplicity, interpretability, and practicality, DP has been
utilized by the U.S. Census bureau \citep{abowd2018us} and is broadly useful for government agencies \citep{drechsler2023differential}. 

A core component of DP is the ``privacy budget" which, roughly speaking, both controls the amount of noise and determines the amount of privacy loss in the analysis. Keeping track of the budget in an analysis is crucial for maintaining privacy. Two important properties of DP are  (sequential) composition and post-processing. The former refers to the composition of budgets when performing more than one analysis on the same dataset. That is, often a practitioner requires more than one summary from the data and hence each respective privacy budget needs to be combined in some manner, for instance by addition. The total budget is that of interest and is computed by a rule of composition. The latter, post-processing, simply stated, states a practitioner can alter a private estimate arbitrarily without revealing any additional information of the underlying data. Post-processing is crucial as protection of data is the goal and hence post-processing must be allowed.

Since the initial definition of DP, many alternative definitions and relaxations have been proposed. These alternatives either sought to accommodate previously unaccommodated mechanisms or to improve on the composition rules for budgets. For instance, pure DP \citep{dwork2006calibrating} cannot accommodate the Gaussian mechanism but approximate DP \citep{dwork2006our}, a relaxation of pure DP, does accommodate it. Zero-concentrated DP (zCDP) \citep{bun2016concentrated} is a relaxation of DP which states that the mechanism under adjacent datasets are similar under $\alpha$ R\'enyi divergence for all $\alpha$. Rather than require the $\alpha$ R\'enyi divergence to be bounded by all alpha, \cite{mironov2017renyi} used $\alpha$ as a parameter itself to generalize zCDP to R\'enyi differential privacy. Lastly, we mention Kullback-Leibler DP \citep{barber2014privacy,cuff2016differential} which measures similarity of the mechanism via the Kullback-Leibler divergence. These definitions are all based on divergences, although the connection between the first definitions and divergences was not discovered until later on.
There are many more definitions of privacy and we refer to \cite{desfontaines2019sok} for a review.

In essence, most, if not all, previous definitions of privacy share the same idea: \textit{the random mechanism is not too different from itself over all possible adjacent datasets}. The main difference in these definitions is how they measure said similarity. Due to this interpretation it is to be expected that the previous literature has been drawn to divergences of densities as a measure of similarity. In statistics, data science, information theory, and machine learning the most commonplace divergence is likely the Kullback-Leibler (KL) divergence. Further, the R\'enyi divergence is indeed a generalization of the KL divergence. These are surely attactive but the KL and R\'enyi divergence measure the difference \textit{from} a density rather than \textit{between} densities. Arguably, a natural way to measure dissimilarity is with a proper distance metric. No previous definitions of DP, however, have considered the use of a proper metric of densities. We, thus, introduce the first definition of differential privacy defined in terms of a distance metric for densities.

The main contributions of this paper are thus:
\begin{enumerate}
    \item We introduce Rao differential privacy. The first definition of differential privacy defined in terms of the Rao distance metric of densities.
    \item We show that our definition respects post-processing and determine the rule for composition, two crucial components of differential privacy.
    \item We determine the privacy parameters for the Gaussian and Laplace mechanisms, the two most prolific privacy mechanism, under our framework.
    \item We determine the privacy parameters of the Generalized Gaussian mechanism a recent attractive mechanism in privacy.

\end{enumerate}

\section{Background}
In this section we introduce all the necessary background and notation for our methodology. We first survey the landscape of the state of the art, divergence based differential privacy definitions. We do not include all definitions for brevity, however, we include those most relevant to the manner at hand and point to the \cite{desfontaines2019sok} for a thorough review. Next, we introduce differential geometry and Riemannian manifolds as this is foundational to the  distance of densities in our definition. Lastly, we motivate and define the Rao distance for parametric probability densities \citep{rao1945information}. The first subsection is, admittedly, disjoint from the latter subsections as the goal of this paper is to bridge these ideas. For more details on Riemannian geometry we refer to \cite{do1992riemannian} and \cite{lee2018introduction}.
\subsection{Differential Privacy}\label{ss:DP}
In this subsection we focus on definitions of differential privacy (DP). We mention some specific mechanisms and properties of DP here but formalize them in subsequent sections. This delay in formalisms is to allow a direct comparison to our proposed counterparts. 

A key component to defining DP is the idea of adjacent datasets. Let $D=\{x_1,x_2,\dots,x_n\}$ denote a dataset of size $n$ such that $x_i\in\mcX$. Two datasets $D,D'$ are said to be \textit{adjacent} if they differ in exactly one observation and $|D|=|D'|$; this is denoted as $D\sim D'$. Let $f_D$ denote a random mechanism, where the subscript $D$ represents the dependence on the dataset. The original definition of differential privacy is as follows.

\begin{definition}[\cite{dwork2006calibrating}]\label{def:DP}
    A random mechanism $f_D$ is said to satisfy $\epsilon$-differentially private, pure differential privacy, if for all $D\sim D'$,
    $$\log\left(\frac{\int_S d\mu f_D}{\int_S d\mu f_{D'}}\right)\leq\epsilon$$
    where $\epsilon$ is a pre-specified parameter referred as the privacy budget, $S$ is any measurable set, and $\mu$ is the base measure.
\end{definition}
The authors referred to this as the densities being $\epsilon$-indistinguishable under adjacent datasets. It is now more commonly stated that the mechanism satisfies \textit{pure differential privacy}, pure DP. Roughly speaking, a mechanism is a density where the order of the spread parameter, $\sigma$, is tailored to satisfy the inequality of Definition \ref{def:DP}. The inequality in Definition \ref{def:DP} is more commonly, and equivalently, expressed as $$\int_S d\mu f_D\leq\exp(\epsilon)\int_S d\mu f_{D'}.$$ 
The former formulation lends itself to a direct interpretation on bounding the log-likelihood over adjacent datasets. The latter, while equivalent, lends itself to the interpretation that the densities are multiplicatively bounded by some constant over adjacent datasets. These interpretations are at the core of DP research and roughly state that the random privacy mechanism is ``not too different" from itself under adjacent datasets. This is more clear when we notice that for small $\epsilon$, $\exp(\epsilon)\approx 1+\epsilon$.

This definition of privacy is quite attractive, for reasons we elaborate on shortly, but an immediate shortcoming is that the Gaussian density cannot be bounded by any $\epsilon$. To achieve an accommodation of the Gaussian density, the definition of privacy must be relaxed. Fortunately, the latter inequality is well suited for a generalization and relaxation. \cite{dwork2006our} introduced \textit{approximate DP} which is defined as follows.

\begin{definition}[\cite{dwork2006our}]
    A mechanism is said to be $(\epsilon,\delta)$-DP if for all adjacent datasets $D\sim D'$, $$\int_S d\mu f_D\leq\exp(\epsilon)\int_Sd\mu f_{D'}+\delta$$
    where $\epsilon>0$ and  $\delta\in(0,1)$ are pre-specified parameters.
\end{definition}
The $\delta$ parameter has the probabilistic interpretation that pure DP is held with probability $1-\delta$ and violated with probability $\delta$. That is, there is $(\delta\times100)\%$ chance there is a privacy leakage.

The attraction to DP can partially be attributed to the following three properties: transparency, composition, and post-processing. \textit{Transparency} refers to the fact that the data curator can reveal the private estimate(s), the privacy mechanism, and privacy budget without revealing the non-private estimate. (Sequential) \textit{composition}\footnote{We only consider sequential composition and hence refer to it simply as composition. However we note there are other forms of composition such as parallel composition.} refers to the combination of privacy budgets from multiple analysis conducted on the same dataset. For instance, if a data curator wishes to release more than one private statistical summary from a dataset, each analysis has its own privacy budget and composition determines the total privacy budget incurred. Lastly, \textit{post-processing} refers to the ability to apply any deterministic or random function to a private estimate without being able to gain any additional information on the original dataset. These properties are fundamental to privacy which the former two definitions uphold and hence any subsequent definitions must satisfy these properties as well.

Before introducing extension of DP we define some common divergences. While the former were not formalized as divergence based, we formalize the connection below. First and foremost we mention the R\'enyi divergence.
\begin{definition}\label{def:Renyi}
    For two probability densities $f_1$ and $f_2$, the R\'enyi divergence of order $\alpha>1$ from $f_1$ is, $$D_{\alpha}(P\|Q)=\frac{1}{\alpha-1}\log \mbE \left(\frac{f_1(x)}{f_2(x)}\right)^{\alpha}.$$
\end{definition}
While this is defined for $\alpha>1$, one can take the limit as $\alpha\rightarrow 1$ and show the limit is the Kullback-Leibler divergence.
\begin{definition}\label{def:KL}
    For two probability densities $f_1$ and $f_2$, the Kullback-Leibler divergence from $f_1$ is defined as $$D_{KL}(f_1\|f_2)=\int f_1(x)\log\frac{f_1(x)}{f_2(x)}dx.$$
\end{definition}

With these divergences in mind, we introduce Kullback Leibler differential privacy. This definition of privacy is one of the first which directly defined privacy in terms of a divergence of densities. Interestingly, while the authors compared pure DP and approximate DP to KL DP, they appear to not have made any claims on them being a divergence themselves. The definition is as follows.
\begin{definition}[\cite{barber2014privacy,cuff2016differential}]
    A random mechanism $f_D$ said to satisfy Kullback Leibler DP if
        $$D_{KL}(f_D\|f_{D'})\leq\alpha$$
    for all $D\sim D'$ and a pre-specified $\alpha$.
\end{definition}

Zero concentrated DP (zCDP) \cite{bun2016concentrated} is the next definition we mention. This definition is based on the R\'enyi divergence. A limitations with $\rho$-zCDP the mechanism is required to by bounded by the R\'enyi divergence for all alpha. This requirement is quite strict as larger $\alpha$ translate to stricter bounds on rare events. Nevertheless, zCDP is defined as follows.

\begin{definition}[\cite{bun2016concentrated}]\label{def:CDP}
    A randomized mechanism $f_D$ is said to satisfy $(\xi, \rho)-$zero concentrated differentially privacy if for all $D\sim D'$ and all $\alpha\in(1,\infty)$, $$D_{\alpha}(f_D\|f_{D'})\leq\xi+\rho\alpha$$ where $D_{\alpha}(\cdot\|\cdot)$ is the $\alpha$-R\'enyi divergence. For $(0,\rho)$ a mechanism satisfies $\rho$-zCDP.
\end{definition}

Moving from the KL divergence to the R\'enyi divergence is a natural generalization however requiring the inequality to hold for all $\alpha$ is quite strong. Thus \cite{mironov2017renyi} relaxed the aforementioned limitation by having $\alpha$ be a parameter of the definition rather than a restriction. This lead to R\'enyi differential privacy defined as follows.

\begin{definition}[\cite{mironov2017renyi}]
    A randomized mechanism $f_D$ is said to satisfy $\zeta-$R\'enyi differential privacy of order $\alpha$ $$D_{\alpha}(f_D\|f_{D'})\leq\zeta$$ for all $D\sim D'$ and pre-specified $\zeta$.
\end{definition}

Having these few definitions we can see that these are intricately connected. The connection between zCDP and R\'enyi DP is clear as they both use the same divergence. Also, as $\alpha\rightarrow 1$ R\'enyi DP is Kullback-Leibler DP. Further \cite{mironov2017renyi} noted that pure DP can be expressed as the R\'enyi DP for $\alpha\rightarrow\infty$. Lastly, approximate-DP has also been shown to be tied to the max-Kullback-Leibler stability. That is, all the presented definitions, including the original definitions, are in fact divergence based. \cite{desfontaines2019sok} have a thorough review on the vast landscape of definitions of privacy as well as the connection between said definitions.

While this is only a few of the many definitions of differential privacy, we note that all presented definitions require the mechanism to not be too different from itself over adjacent datasets. This similarity is quantified by the use a statistical divergence.  While these definitions are motivation and constructed in vastly different manners, they are deeply connected. To our knowledge, however, there seems to be no definitions of privacy which utilize a proper distance metric of densities for quantification of similarity of mechanism. That is our goal. Following we introduce differential geometry as it is a fundamental in the construction of the distance we intend to use.

\subsection{Riemannian Manifolds}\label{ss:Manifold}
Let the tuple $(\mcM,g)$ denote a complete Riemannian manifold. Here $\mcM$ 
is a smooth manifold and $g$ is the Riemannian metric tensor. At each point $p\in\mcM$ we have a tangent space $T_p\mcM$. The tangent space is the span of all vectors which are tangent to the manifold, i.e. $T_p\mcM=\text{span}\{\dot{\alpha}(0)|\alpha:(-\epsilon,\epsilon)\rightarrow\mcM,\alpha(0)=p\}$ with $\alpha$ being a smooth path. The Riemannian metric $g$ defines an inner product on each tangent space and varies smoothly on the manifold. So, $g:T_p\mcM\times T_p\mcM\rightarrow\mbR^{\geq0}$ and is denoted as $g(u,v)=\langle u,v\rangle_{g(p)}$, or $\langle u,v\rangle_p$ for brevity, where $u,v\in T_p\mcM$ are tangent vectors. The metric $g$ expressed as a matrix has elements $g_{ij}=\langle \partial x^i,\partial x^j\rangle$ where $\partial x_1,\partial x_2,\dots,\partial x_d$ is a set of local basis on $T_p\mcM$. The metric is invariant to choice of basis. Under this matrix representation we, further, have that $g(u,v)=\sum_{ij}g_{ij}(p)u_{i}v_{j}$ with $p$ being the footpoint on the manifold. The metric is symmetric positive definite, and hence invertible. The inverse is denoted as $g^{ij}$.

The inner product lends itself to geometric interpretations such as angles and lengths. Let $\alpha(t):[0,1]\rightarrow\mcM$ be a smooth path such that $\alpha(0)=p$, $\alpha(1)=q$. The length of the path, $\mcL(\alpha)$, is the cumulative instantaneous rate of change: $\mcL(\alpha)=\int_0^1 \langle\frac{d}{dt} \alpha(t),\frac{d}{dt} \alpha(t)\rangle^{1/2}_{\alpha(t)}dt$. The integrand is typically referred to as the \textit{line element} and denoted $ds$. This denotation conveys the length of infinitesimally small vector $ds^2=\sum_{ij} g_{ij}v_iv_j$ where $v_i$ are the local coordinates of vector $v=\dot{\alpha}$. So, we have $\mcL(\alpha)=\int_0^1 ds\, dt$ with $ds$ implicitly being a function of $t$. The distance between two points $p,q$ is the length of the smooth path joining the two points of minimal length. That is, $d(p,q):=\inf_{\alpha,\alpha(0)=p,\alpha(1)=q}\mcL(\alpha)$ with $\alpha$ being a smooth path.

For the path to be of minimal length it must be a solution to the Euler-Lagrange equations,

$$\sum_{i} g_{ik}(x)\frac{d^2x_i}{dt^2}+\sum_{ij}[ij,k]\frac{dx_i}{dt}\frac{dx_j}{dt}=0\,k=1,2,\dots,d$$

where $[ij,k]$ are the Christoffel symbols\footnote{The Christoffel symbol is more commonly denoted as $\Gamma_{ij}^k$. We use $[ij,k]$ as it aligns with the notation in \cite{rao1945information}, a foundation of the current work.} defined as
$$[ij,k]=\frac{1}{2}\left(\frac{\partial g_{ik}(x)}{\partial x_j}+\frac{\partial g_{ik}(x)}{\partial x_i}-\frac{\partial g_{ij}(x)}{\partial x_k}\right)\,i,j,k=1,2,\dots,d.$$ The Euler-Lagrange equations tell us that a geodesic is entirely characterized by its starting point and initial velocity.

\subsection{Rao's distance} \label{ss:RaoDist}
With the background of Riemannian manifolds in mind, we introduce Rao's distance for parametric densities. We motivate the distance as in \citep{rao1945information} and a more thorough assessment can be found in \citep{atkinson1981rao,miyamoto2024closed}. 

Suppose we have a family of parametric densities defined over a parameter space $\Theta$ where $\theta\in\Theta\subset\mbR^n$. 
Each density $p_{\theta}\in\mcP_{\theta}$ is identified by a $\theta\in\Theta$.
That is, $\mcP_{\theta}:=\{p_{\theta}=p(x;\theta);\theta\in\Theta\}$ is a family of densities with $\int d\mu\, p(x)=1$. 
The random variable $x$ can be either discrete or continuous, the measure in the following will simply need to be adopted to each situation. So, $\mu$ denotes the typical Lebesgue measure for continuous distributions and count measure for discrete distributions. The densities must satisfy regularity conditions which are outlined in Appendix \ref{ss:InfMat}.

Given any two densities $p_1,p_2\in \mcP_\theta$, it is clear that $p_1+p_2\ne\mcP_\theta$. Since addition does not hold, $\mcP_\theta$ is not a linear space and thus the geometry of it must be considered for a proper consideration of distance. Each density is identified by their parameters and thus the distance between the densities is the distance between the parameters in $\Theta$. That is, the Rao distance $d_R(p_1,p_2):=d_R(\theta_1,\theta_2)$ with $p_1,p_2\in \mcP_\theta$ and $\theta_i$ being the parameters of $p_i$.

To incorporate the geometry of $\mcP_\theta$ into the computation of a distance, Rao identified the manifold structure of $\mcP_\theta$ \citep{rao1945information}. We thus require a metric $g$. First, recall the information matrix \citep{fisher1922mathematical} of a density is defined as the variance of the score as, $$E\left(\frac{\partial \ln p(x;\theta)}{\partial\theta_i}\frac{\partial \ln  p(x;\theta)}{\partial\theta_j}|\theta\right).$$ It is clear that  $$\frac{\partial \ln p(x;\theta)}{\partial\theta_i}=\frac{1}{p(x;\theta)}\frac{\partial  p(x;\theta)}{\partial\theta_i},$$ and it is important to note the significance of this relationship. While the LHS the typical representation, note the RHS clearly displays that the derivative of the $\log$ is the \textit{relative change} of a density with respect to a parameter.

Rao noticed that the information matrix, 
$g_{ij}(\theta)=E\left(\frac{\partial \ln p(x;\theta)}{\partial\theta_i}\frac{\partial \ln  p(x;\theta)}{\partial\theta_j}|\theta\right)$, can be designated as the metric $g$ for $\mcP_\theta$.
Briefly, the information matrix is a symmetric positive definite matrix  at every density and this is a necessary property of a metric. Combining this with the ideas of Section \ref{ss:Manifold} the distance between two densities, represented by, with slight abuse of notation, $\theta_1,\theta_2$ is then given by $\int_{\theta_1}^{\theta_2}dsdt$. The squared-line element is thus $$ds^2=\sum g_{ij}\frac{d\theta_i}{dt}\frac{d\theta_j}{dt}$$ with $g$ being the information matrix.

Generally speaking, the distance between two densities can be challenging to compute. One needs to find the geodesic by satisfying the Euler-Lagrange equations, so there is a bottleneck in this computation. For instance, for general multivariate Gaussian there is no closed form equation for the distance \cite{nielsen2023simple,pinele2020fisher}. It turns out that for one parameter densities, however, most of these complications do not occur. Below we give an example on how to compute such a distance.

\begin{example} \label{ex:Gauss}\cite{rao1945information,costa2015fisher,miyamoto2024closed}
    Consider the univariate Gaussian density $f(x)=\frac{1}{\sqrt{2\pi\sigma^2}}\exp(-(x-\mu)^2/(2\sigma^2))$. Here $\theta=(\mu,\sigma)$ is the set of parameters. 
    The entries of the information matrix are $g_{11}=\mbE[\frac{\partial}{\partial\mu}\log f\frac{\partial}{\partial\mu}\log f]=\mbE[(x-\mu)^2/\sigma^4]=1/\sigma^4\mbE[(\mu-x)^2]=1/\sigma^2$, $g_{12}=g_{21}=\mbE[\frac{\partial}{\partial\mu}\log f\frac{\partial}{\partial\sigma}\log f]=0$, and $g_{22}=\mbE[\frac{\partial}{\partial\sigma}\log f\frac{\partial}{\partial\sigma}\log f]=\mbE[-1/\sigma^2+(x-\mu)^4/\sigma^6]=2/\sigma^2$. We have the information matrix, and hence the Riemannian metric is:   
    $$g=\begin{bmatrix}
        \frac{1}{\sigma^2} & 0 \\
        0 & \frac{2}{\sigma^2}
    \end{bmatrix}.$$
    The coordinate pair in the parameter space is $(\mu,\sigma)$, and the entries of the metric tensor give us information on how these parameters effect the distance. For instance, we note that $g_{11}$ says that, for two Gaussian densities with the same $\sigma$, they are increasingly similar for larger $\sigma.$ This matches intuition as the distributions become infinitely flat as $\sigma\rightarrow\infty$, so they locally become more and more similar. Computing the distance between two normal densities is not immediately straightforward as we need to integrate $$\int_{(\mu_1,\sigma_1)}^{(\mu_2,\sigma_2)}\sqrt{\sum g_{ij}\partial\mu\partial\sigma}dt=\int_{(\mu_1,\sigma_1)}^{(\mu_2,\sigma_2)}\sum \sqrt{\frac{1}{\sigma^2}(\partial\mu)^2+\frac{2}{\sigma^2}(\partial\sigma)^2}dt$$
    We can identify this line element with that of the upper Poincare half-plane so we have that 
    $$d_R((\mu_1,\sigma_1),(\mu_2,\sigma_2))=2\sqrt{2}\log\frac{\sqrt{(\mu_1-\mu_2)^2+2(\sigma_1+\sigma_2)^2}
    +\sqrt{(\mu_1-\mu_2)^2+2(\sigma_1-\sigma_2)^2}}
    {\sqrt{(\mu_1-\mu_2)^2+2(\sigma_1+\sigma_2)^2}
    -\sqrt{(\mu_1-\mu_2)^2+2(\sigma_1-\sigma_2)^2}}.$$
    We note this can be algebraically rewritten and include equivalent reformulations in the appendix.
    The two special cases, as noted by Rao are
    $d_R((\mu_1,\sigma),(\mu_2,\sigma))=\frac{|\mu_1-\mu_2|}{\sigma}$  and   $d_R((\mu,\sigma_1),(\mu,\sigma_2))=\sqrt{2}|\log(\sigma_2/\sigma_1)|$.

\end{example}
While this is clearly a complex distance, for densities with only one parameter the distance is much simpler as the geodesics are immediate. The Gaussian, for instance, can be thought of as having one parameter by setting the either the mean or variance as fixed.

\section{Rao Differential Privacy}\label{ss:RaoPriv}
In Section \ref{ss:DP} we introduced some definitions of Differential Privacy. Pure, approximate, zero concentrated, and R\'enyi DP all share the identifying feature that they define privacy in terms of a divergence of mechanisms over adjacent datasets. In Section \ref{ss:RaoDist} we introduced the Rao distance of densities and rely on those tools for the remaining of the paper. In this section we introduce \textit{Rao DP}, our proposed definition of privacy based on the Rao distance.

\begin{definition}
    A random mechanism $f_D$ satisfies $\theta-$Rao differential privacy if for all adjacent datasets $D\sim D'$ we have that $$d_R(f_D,f_{D'})\leq \theta$$
    where $d_R(\cdot,\cdot)$ is the Rao distance for densities and $\theta$ is the privacy budget, a pre-specified parameter.
\end{definition}

As opposed to the previous approaches of differential privacy, this utilizes a proper distance metric on densities. Previous methods of privacy have been defined of divergences for their attractive properties but we show that our proposed method is not only more intuitive but also is a generalization. Comparing the formulations more directly, previous formulations bound the log-likelihood ratio of the mechanism over adjacent datasets but here we rather consider how the \textit{relative} log-likelihood changes between the adjacent datasets.

We note that the Rao distance requires the densities to have the same support, and this is a standard assumption in DP. Importantly, the privacy loss random variable is not defined if the support of the two densities is not the same. The assumptions on the densities are outlined in Appendix \ref{ss:InfMat} but we note they are the standard regularity conditions necessary for the information matrix entries.

\subsection{Composition}
An important property of DP is composition. Composition refers to the composition of privacy budgets when a data curator conducts more than one private analysis on a dataset. That is, given two mechanisms $f_1,f_2$ with respective privacy budgets $\theta_1,\theta_2$, composition is an assurance that combined results also satisfy the definition of privacy and how the budgets should be combined. A large motivation for introduction of novel definitions of DP is tighter bounds on composition of budgets.

To establish composition results for Rao privacy we need to consider the Rao distance between product densities. So, we briefly return to Riemannian geometry. Given two Riemannian manifolds $(\mcP_1,g_1)$ and $(\mcP_2,g_2)$, with the product density $f_1\times f_2$ as an element of the \textit{product manifold} $\mcP_1\times\mcP_2$. The product manifold is equipped with the Riemannian metric,
$$g=\begin{bmatrix}
   g_1 & 0 \\
   0 & g_2
\end{bmatrix}$$
with the $0$'s being matrices of zeros of suitable size and $g_1,g_2$ being the matrix representation of the metric from the respective space. On product manifolds, the distance between any two elements can be computed utilizing the Pythagorean Theorem. That is, if $d_1,d_2$ are the distance on $\mcP_1,\mcP_2$, respectively, then the distance on $\mcP_1\times\mcP_2$ is $d=\sqrt{d_1^2+d_2^2}$.

We thus have that Rao distance between product densities is the square root of the sum of square Rao distances between the marginal densities, i.e.,
$$d_R(f_{1,D}\times f_{2,D},f_{1,D'}\times f_{2,D'})=\sqrt{d_R(f_{1,D},f_{1,D'})^2+d_R( f_{2,D}, f_{2,D'})^2}.$$

\begin{lemma}
    Given the above, two mechanisms $f_{1,D}$ and $f_{2,D}$ with privacy budgets $\theta_1$ and $\theta_2$, respectively, the total privacy budget is $\sqrt{\theta_1^2+\theta_2^2}$.
\end{lemma}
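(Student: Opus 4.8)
The plan is to reduce the statement to the product–manifold distance identity recorded in the paragraph preceding the lemma. The composed mechanism, run on a dataset $D$, releases the pair $(Y_1,Y_2)$ where $Y_1\sim f_{1,D}$ and $Y_2\sim f_{2,D}$ are generated with independent internal randomness; hence, conditionally on $D$, its density is the product $f_{1,D}\times f_{2,D}$, an element of the product family $\mcP_1\times\mcP_2$. The first step is therefore to confirm that the Fisher metric of this product family is the block-diagonal $g=\mathrm{diag}(g_1,g_2)$. This is a short computation: since $\log(f_1 f_2)=\log f_1+\log f_2$, a score coordinate coming from a parameter of $f_1$ does not depend on $y_2$ and conversely; the diagonal blocks of $\mbE[\mathrm{score}\cdot\mathrm{score}^\top]$ then reproduce $g_1$ and $g_2$, while each off-diagonal entry is the expectation of a product of the (independent, mean-zero) $f_1$- and $f_2$-scores and so vanishes.

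Second, I would invoke the product–metric distance formula: with $g$ block diagonal, any smooth path $\alpha(t)=(\alpha_1(t),\alpha_2(t))$ has squared line element $ds^2=ds_1^2+ds_2^2$, the geodesic (Euler--Lagrange) equations decouple across the two blocks, and the length-minimizing path between $(p_1,p_2)$ and $(q_1,q_2)$ is a product of the minimizing paths in each factor, parametrized proportionally to arc length, giving
\[
d_R\big(f_{1,D}\times f_{2,D},\; f_{1,D'}\times f_{2,D'}\big)=\sqrt{\,d_R(f_{1,D},f_{1,D'})^2+d_R(f_{2,D},f_{2,D'})^2\,}.
\]
This is exactly the displayed identity above the lemma, which I may assume.

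Third, I would close the argument. Fix adjacent datasets $D\sim D'$. By hypothesis $f_{1,D}$ is $\theta_1$-Rao DP and $f_{2,D}$ is $\theta_2$-Rao DP, so $d_R(f_{1,D},f_{1,D'})\le\theta_1$ and $d_R(f_{2,D},f_{2,D'})\le\theta_2$; substituting into the identity yields $d_R(f_{1,D}\times f_{2,D},\,f_{1,D'}\times f_{2,D'})\le\sqrt{\theta_1^2+\theta_2^2}$. Since $D\sim D'$ was arbitrary, the composed mechanism satisfies $\sqrt{\theta_1^2+\theta_2^2}$-Rao differential privacy, which is the claim.

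The step I expect to be the main obstacle is the product–manifold Pythagorean identity itself — in particular ruling out that some ``coupled'' path between the two factors could beat the product of the component geodesics, and handling existence of length-minimizers (completeness of $\mcP_1\times\mcP_2$). Because the excerpt already asserts this identity and the form of the product metric, in the write-up this reduces to the block-diagonality check plus the observation that the composition uses independent randomness, so the genuinely delicate geometry has been absorbed into the cited fact. A secondary point worth a sentence is that this covers non-adaptive composition, where both mechanisms are fixed in advance; adaptive composition, in which $f_2$ may depend on the realized $Y_1$, would require a separate argument and is not what the lemma asserts.
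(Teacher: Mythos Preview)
Your proposal is correct and follows the same route as the paper: the paper does not give a separate proof of the lemma but treats it as an immediate consequence of the preceding paragraph (block-diagonal product metric, Pythagorean distance on the product manifold, hence $d_R$ of the product equals $\sqrt{d_R^2+d_R^2}$), and you have simply filled in the bounding step $d_R(f_{i,D},f_{i,D'})\le\theta_i$ and the supremum over $D\sim D'$. Your additional remarks on verifying block-diagonality of the Fisher information via independence of the score components, and on the non-adaptive nature of the composition, are more explicit than anything the paper writes, but they elaborate the same argument rather than departing from it.
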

Compared to pure DP, the composition here is much tighter. Under pure DP the composition of the budgets $\epsilon_1$ and $\epsilon_2$ is simply the sum $\epsilon_1+\epsilon_2$. Clearly we have that $\epsilon_1+\epsilon_2>\sqrt{\epsilon_1^2+\epsilon_2^2}$, but this is only useful when the budgets mean the same thing in different definitions of privacy. So, if a mechanism satisfies both pure DP and Rao DP and one needs to conduct multiple analyses, then utilizing Rao-DP one requires less total budget for comparable statistical utility. Table \ref{tab:priv} list definitions of privacy and their respective rule for composition. We didn't introduce GDP \citep{dong2022gaussian} in \ref{ss:DP} as it is not divergence based, but it has the same composition as Rao DP. We elaborate on GDP in \ref{ss:GDP}.

\begin{table}
\centering
\caption{Table of Privacy Definitions and their respective sequential composition result. The index $i$ refers to $i$th budget.}
\begin{tabular}{|c|c|c|}
\hline
& Budget & Composition \\\hline
Pure DP  \citep{dwork2006calibrating} & $\epsilon$ & $\sum\epsilon_i$ \\
Approx DP \citep{dwork2006our} & $(\epsilon,\delta)$ & $(\sum\epsilon_i,\sum\delta_i)$\\
KL DP \citep{barber2014privacy,cuff2016differential} & $\alpha$&   $\sum\alpha_i$\\
mCDP \cite{dwork2016concentrated} & $(\mu,\tau)$ & $(\sum \mu_i,\sqrt{\sum\tau_i^2})$  \\
zCDP \cite{bun2016concentrated} & $(\xi=0,\rho)$ & $(0,\sum\rho_i)$\\
R\'enyi DP \cite{mironov2017renyi} & $(\alpha,\epsilon)$ &  $(\alpha,\sum \epsilon_i)$\\
GDP \cite{dong2022gaussian} & $\mu$ & $\sqrt{\sum\mu_i^2}$ \\
\textbf{Rao DP (Ours)} & $\theta$ & $\sqrt{\sum\theta_i^2}$ \\ \hline
\end{tabular}
\label{tab:priv}
\end{table}

\subsection{Post-Processing}
Another important property of differential privacy is post-processing. We first summarize the previous notions of post-processing to draw a comparison. Let $f(x;D)$ be a random mechanism that satisfies $(\epsilon,\delta)$-DP and $m$ be a random map; we have that $m(f(x;D))$ also satisfies $(\epsilon,\delta)$-DP \citep{dwork2006calibrating}.

To show that Rao DP is immune to post-processing, we appeal to the following. Given two densities $p_1,p_2$ the Rao distance, using the square root transformation, $p(x)\mapsto \sqrt{p(x)}$, the Rao distance is then 
$d_R(p_1,p_2)=2\cos^{-1} \langle \sqrt{p_1},\sqrt{p_2}\rangle$ where
$\langle\cdot,\cdot\rangle$ is the $\mbL^2$ inner product \citep{amari2000methods,ay2017information}. Noting that $\int dx\left(\sqrt{p}\right)^2=1$ we see this transformation embeds the densities onto a the positive orthant of a sphere. 

\begin{theorem}\label{thm:RaoPP}
    Let $f_D$ be a random mechanism that satisfies Rao DP for some $\theta$ and $\varphi$ be an arbitrary random function. We have that $\varphi\circ f_D$ also satisfies Rao DP under the same $\theta$.
\end{theorem}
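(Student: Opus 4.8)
\begin{proofsketch}
The plan is to realize the arbitrary random function $\varphi$ as a Markov transition kernel and to show that pushing a pair of densities through such a kernel can only shrink, never enlarge, the Rao distance; post-processing immunity then drops out of the defining inequality $d_R(f_D,f_{D'})\le\theta$. Write $\varphi$ as a kernel $k(\cdot\mid\cdot)$, so that the post-processed mechanism has density $(\varphi\circ f_D)(y)=\int k(y\mid x)\,f_D(x)\,d\mu(x)$, the purely deterministic case being recovered by letting $k(\cdot\mid x)$ be a point mass. The engine is the square-root representation recalled above, $d_R(p_1,p_2)=2\cos^{-1}\langle\sqrt{p_1},\sqrt{p_2}\rangle$, where the inner product is the Bhattacharyya affinity $\rho(p_1,p_2)=\int\sqrt{p_1(x)p_2(x)}\,d\mu(x)\in[0,1]$. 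Since $\arccos$ is strictly decreasing, it suffices to prove the affinity is monotone under the kernel: $\rho(\varphi\circ f_D,\varphi\circ f_{D'})\ge\rho(f_D,f_{D'})$.

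For this I would argue pointwise in the output variable and then integrate. Fix $y$ and apply the Cauchy--Schwarz inequality in $\mbL^2(d\mu)$ to $x\mapsto\sqrt{k(y\mid x)f_D(x)}$ and $x\mapsto\sqrt{k(y\mid x)f_{D'}(x)}$ to obtain
$$\int k(y\mid x)\sqrt{f_D(x)f_{D'}(x)}\,d\mu(x)\le \sqrt{(\varphi\circ f_D)(y)}\,\sqrt{(\varphi\circ f_{D'})(y)}.$$
Integrating over $y$ against the base measure of the output space, swapping the order of integration on the left by Tonelli (all integrands are nonnegative), and using $\int k(y\mid x)\,dy=1$, the left side becomes $\rho(f_D,f_{D'})$ while the right side is $\rho(\varphi\circ f_D,\varphi\circ f_{D'})$. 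Composing with the decreasing map $2\cos^{-1}(\cdot)$ gives $d_R(\varphi\circ f_D,\varphi\circ f_{D'})\le d_R(f_D,f_{D'})$, and since $f_D$ is $\theta$-Rao DP the right side is $\le\theta$ for every $D\sim D'$; hence so is the left side, which is precisely $\theta$-Rao DP for $\varphi\circ f_D$.

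The main obstacle is measure-theoretic bookkeeping rather than a conceptual difficulty. I would need to make sure the post-processed mechanism still admits a density against a suitable base measure on the output space --- for a genuinely deterministic $\varphi$ this may mean passing to the pushforward measure and reinterpreting $\rho$ as an integral against it --- and that the support and regularity conditions from Appendix \ref{ss:InfMat} needed for $d_R$ to be well defined are inherited by the image family; equality of the supports of $f_D$ and $f_{D'}$ forces equality of the supports of $\varphi\circ f_D$ and $\varphi\circ f_{D'}$, so that part is automatic. One should also note the degenerate endpoints where $\rho\in\{0,1\}$, at which $\arccos$ is still continuous and monotone, so nothing special is required there. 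A pleasant byproduct is that the inequality is strict unless $\varphi$ is essentially injective on the relevant supports, so post-processing can only ever improve the Rao privacy guarantee.
\end{proofsketch}
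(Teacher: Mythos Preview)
Your argument is correct and in fact cleaner than the paper's. Both proofs rest on the same representation $d_R(p_1,p_2)=2\cos^{-1}\langle\sqrt{p_1},\sqrt{p_2}\rangle$ and reduce post-processing immunity to a monotonicity statement about the Bhattacharyya affinity. The paper, however, restricts to a \emph{deterministic} $\varphi$ and proceeds via a change-of-variables identity, arriving at an \emph{equality} $\int\sqrt{\varphi(f_D)\varphi(f_{D'})}=\int\sqrt{f_D f_{D'}}$; read literally this only holds when $\varphi$ is (almost everywhere) injective, and it does not address the ``random function'' claimed in the theorem statement. Your route through a Markov kernel and the pointwise Cauchy--Schwarz inequality is the standard data-processing inequality for the Hellinger affinity: it delivers the correct one-sided inequality $\rho(\varphi\circ f_D,\varphi\circ f_{D'})\ge\rho(f_D,f_{D'})$, covers non-injective and randomized $\varphi$ uniformly, and makes the strictness remark at the end transparent. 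The only genuine caveat you flag --- that $d_R$ on the post-processed pair must be read in the nonparametric (sphere) sense, since $\varphi\circ f_D$ need not sit in the original parametric family --- is a real issue for the theorem itself rather than for your proof, and the paper's argument is equally exposed to it.
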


The proof is in Appendix \ref{ss:Proofs} but follows directly from the square root transformation and the $\mbL^2$ inner product properties. In the next sections we determine the rate parameters for common mechanisms, Laplace and Gaussian, and the Generalized Gaussian under Rao DP.
\section{Mechanisms}
In the previous sections, we have built the necessary tools to define Rao DP. We showed Rao DP satisfies composition and is immune to post-processing. Lastly, here we determine the privacy parameters of common mechanisms. Thus, in this section we show that the Laplace, Gaussian, and the Generalized Gaussian satisfy Rao DP. We utilize the closed form Rao distances for densities from \cite{miyamoto2024closed} and include derivations when necessary.

\begin{definition}\label{def:sens}
    The $\eta$-sensitivity of a function $h:D\rightarrow\mcS$ and $d_\eta$ being a distance function is defined $\Delta_\eta(h)=\sup_{D\sim D'} d_\eta(h(D),h(D'))$.
\end{definition}
This is typically referred to as \textit{global sensitivity} but we simply refer to this as the sensitivity as we do not consider other notions of sensitivity such as local sensitivity.
The particular distance for the sensitivity is dependent on both the mechanism and the space $\mcS$ and the most common distances being the $\mbL^1$ and $\mbL^2$ distances. Further, $\mcS$ is typically $\mbR^m$ but with recent advances on DP for Riemmanian manifold, the distance is of that space \citep{reimherr2021differential}. While its necessary to be cognizant of $\eta$, the distance is typically clear from the context, so we drop the superfluous notation.
\subsection{Laplace mechanism}\label{ss:Lap}
The Laplace mechanism was the first 
mechanism introduced in \citep{dwork2006calibrating}. The Laplace density, also known as the double exponential, is defined as $f(x;\mu,b)=(2\sigma)^{-1}\exp\{|x-\mu|/\sigma\}$ where $\mu$ is the expected value and $\sigma$ is the spread parameter. Let us determine the Rao distance between Laplace densities with fixed, equal $\sigma$ as this is necessary to show the mechanism satisfies Rao DP.

\begin{theorem} \label{thm:LapDist}
    Let $f_1(x;\mu_1,\sigma)$ and $f_2(x;\mu_2,\sigma)$ be two Laplace densities with the same scale parameter. 
    The Rao distance between $f_1$ and $f_2$ is, $$d_R(f_1(x;\mu_1,\sigma),f_2(x;\mu_2,\sigma))=\frac{|\mu_1-\mu_2|}{\sigma}.$$
\end{theorem}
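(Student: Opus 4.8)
The plan is to exploit the fact that, once the scale $\sigma$ is held fixed, the Laplace family $\{f(x;\mu,\sigma):\mu\in\mbR\}$ is a one-parameter location family, so its Rao manifold is one-dimensional and the geodesic between any two of its points is simply the interval joining their parameters; no Euler--Lagrange computation is needed. First I would compute the single entry of the information matrix, $g_{11}(\mu)=\mbE\left[\left(\frac{\partial}{\partial\mu}\log f(x;\mu,\sigma)\right)^{2}\mid\mu\right]$. Writing the normalized log-density as $\log f(x;\mu,\sigma)=-\log(2\sigma)-|x-\mu|/\sigma$ and differentiating in $\mu$ away from the point $x=\mu$ gives $\frac{\partial}{\partial\mu}\log f=\sigma^{-1}\operatorname{sgn}(x-\mu)$, whose square is $\sigma^{-2}$ almost everywhere; hence $g_{11}(\mu)=\sigma^{-2}$, independent of $\mu$. (Equivalently one may invoke the location-family identity $g_{11}=\int (f_0'(x))^{2}/f_0(x)\,dx$ with $f_0(x)=\tfrac{1}{2\sigma}e^{-|x|/\sigma}$, which evaluates to $\sigma^{-2}$.)

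With $g_{11}$ constant, the squared line element of Section~\ref{ss:RaoDist} reduces to $ds^{2}=\sigma^{-2}\,d\mu^{2}$, i.e. $ds=\sigma^{-1}|d\mu|$. Since the parameter manifold is one-dimensional, up to reparametrization there is only one geodesic joining $\mu_1$ and $\mu_2$ (the monotone path), and it is automatically length-minimizing because any path that backtracks is strictly longer. Integrating the line element along it yields
$$d_R\big(f_1(x;\mu_1,\sigma),f_2(x;\mu_2,\sigma)\big)=\int_{\mu_1}^{\mu_2}\frac{1}{\sigma}\,d\mu=\frac{|\mu_1-\mu_2|}{\sigma},$$
which is the asserted formula. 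I would close by remarking that this coincides exactly with the equal-variance special case of the Gaussian Rao distance recorded in Example~\ref{ex:Gauss}, since in both cases the location parameter carries Fisher information $\sigma^{-2}$ --- a convenient parallel for the mechanism discussion that follows.

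The only point that genuinely needs care --- and which I would flag explicitly --- is that $\mu\mapsto|x-\mu|$ fails to be differentiable at the single value $x=\mu$, so before writing $g_{11}$ as the expectation of a squared score one must confirm that the Laplace location family satisfies the regularity conditions of Appendix~\ref{ss:InfMat} (equivalently, that it is differentiable in quadratic mean). This is routine: the exceptional set $\{x=\mu\}$ has measure zero under $f(\cdot\,;\mu,\sigma)$ and contributes nothing to the integral, and the Laplace family is a textbook example of a regular location model. Everything else is a single elementary integral, so I do not anticipate any real obstacle.
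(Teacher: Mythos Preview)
Your proposal is correct and follows essentially the same route as the paper: the paper's proof sketch merely states that the argument is ``nearly identical to Example~\ref{ex:Gauss} with the information matrix being $g_{11}=\frac{1}{\sigma^2}$,'' and you have filled in exactly those details --- computing the single score entry, reducing the line element to $ds=\sigma^{-1}|d\mu|$, and integrating along the one-dimensional parameter. Your explicit treatment of the nondifferentiability at $x=\mu$ and the quadratic-mean regularity is a useful addition that the paper leaves implicit.
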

\begin{proofsketch}
The proof is nearly identical to Example \ref{ex:Gauss} with the information matrix being $g_{11}=\frac{1}{\sigma^2}$. 
\end{proofsketch}

\begin{definition}
    The Laplace mechanism to release a private version of summary $h(D)$ with the form $f(x;D)=(2\sigma)^{-1}\exp\{|x-h(D)|/\sigma\}$.
\end{definition}
In the case of Euclidean data, $D\subset \mbR$, one can equivalent see that the Laplace mechanism ``adds" noise as $h(D)+f(x,\mu=0,\sigma)$. While this is a convenient representation, we avoid this formulation as addition is not defined on non-linear spaces such as Riemannian manifolds.

\begin{theorem}\label{thm:LapRao}
    The Laplace mechanism with $\sigma\geq\Delta/\theta$ satisfies $\theta$-Rao Privacy.\\
\end{theorem}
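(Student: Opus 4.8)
The plan is to directly combine the sensitivity bound with the closed-form Rao distance between Laplace densities from Theorem~\ref{thm:LapDist}. Recall that the Laplace mechanism releasing $h(D)$ produces the density $f(x;D) = (2\sigma)^{-1}\exp\{|x - h(D)|/\sigma\}$, so under adjacent datasets $D \sim D'$ the two mechanism densities $f_D$ and $f_{D'}$ are both Laplace with the common scale parameter $\sigma$ and differ only in their location parameters, $h(D)$ and $h(D')$ respectively. This is exactly the setting of Theorem~\ref{thm:LapDist}, which gives $d_R(f_D, f_{D'}) = |h(D) - h(D')|/\sigma$.

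First I would invoke Definition~\ref{def:sens} of $\eta$-sensitivity with $d_\eta$ the $\mbL^1$ (absolute value) distance on $\mbR$, so that $\Delta := \Delta(h) = \sup_{D \sim D'} |h(D) - h(D')|$. Then for any adjacent pair $D \sim D'$ we have $|h(D) - h(D')| \le \Delta$, hence
$$d_R(f_D, f_{D'}) = \frac{|h(D) - h(D')|}{\sigma} \le \frac{\Delta}{\sigma}.$$
Next I would use the hypothesis $\sigma \ge \Delta/\theta$, which is equivalent to $\Delta/\sigma \le \theta$, to conclude $d_R(f_D, f_{D'}) \le \theta$ for all $D \sim D'$. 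By the definition of $\theta$-Rao differential privacy, this is precisely what needs to be shown.

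There is no serious obstacle here: the result is essentially a one-line consequence of Theorem~\ref{thm:LapDist} once the sensitivity is introduced. The only point requiring minor care is matching the distance used in the sensitivity ($\mbL^1$ on $\mbR$) with the location-parameter difference appearing in the Rao distance formula; since both are just $|h(D) - h(D')|$ in the scalar case, this is automatic. If one wanted the statement to cover multivariate $h(D) \in \mbR^m$ with a product (independent per-coordinate) Laplace mechanism, one would additionally need the composition result (the Pythagorean rule for product manifolds stated before the Composition Lemma) together with the appropriate choice of $\mbL^2$ versus $\mbL^1$ sensitivity, but for the scalar mechanism as defined above the argument is complete as stated.
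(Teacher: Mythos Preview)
Your proposal is correct and mirrors the paper's own proof essentially line for line: both invoke Theorem~\ref{thm:LapDist} to write $d_R(f_D,f_{D'}) = |h(D)-h(D')|/\sigma$, bound the numerator by $\Delta$ via the sensitivity definition, and then use $\sigma \ge \Delta/\theta$ to conclude. Your remarks on the matching of the $\mbL^1$ sensitivity and the possible multivariate extension go slightly beyond the paper's terse argument but do not alter the approach.
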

\begin{proof}
This follows semi-directly from Theorem \ref{thm:LapDist}. We have that $d_R(f_D,f_{D'})=\frac{|h(D)-h(D')|}{\sigma}$. We thus need $\frac{|h(D)-h(D')|}{\sigma}\leq\theta$ for all $D\sim D'$.
\begin{align*}
 \frac{|h(D)-h(D')|}{\sigma} &\leq \frac{\Delta}{\sigma} \\
 \implies \frac{\Delta}{\sigma} &\leq \theta\\
 \implies \sigma &\geq \frac{\Delta}{\theta}. 
\end{align*}
This is the required result.
\end{proof}

Under the pure DP framework the spread parameter must be $\sigma\geq \Delta/\epsilon$. Thus we have the following result.
\begin{corollary}
The Laplace mechanism that satisfies $\epsilon$ pure DP satisfies Rao DP with $\theta=\epsilon$.
\end{corollary}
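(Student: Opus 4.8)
The plan is to compare the two conditions on the spread parameter $\sigma$ directly. By Theorem~\ref{thm:LapRao}, a Laplace mechanism satisfies $\theta$-Rao privacy precisely when $\sigma \geq \Delta/\theta$, where $\Delta$ is the ($\mbL^1$) sensitivity of the summary $h$. By the classical result of \cite{dwork2006calibrating} (restated in the paragraph preceding the corollary), the same Laplace mechanism satisfies $\epsilon$-pure DP precisely when $\sigma \geq \Delta/\epsilon$, with the \emph{same} sensitivity $\Delta$, since both notions of sensitivity for the Laplace mechanism are measured in the $\mbL^1$ distance on $\mbR^m$. Hence the two feasibility conditions on $\sigma$ are literally the same inequality with $\epsilon$ playing the role of $\theta$.

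First I would state that the mechanism in question satisfies $\epsilon$-pure DP, so by the cited calibration rule we have $\sigma \geq \Delta/\epsilon$. Then I would observe that setting $\theta := \epsilon$ makes the Rao-privacy requirement $\sigma \geq \Delta/\theta = \Delta/\epsilon$ identical to the hypothesis just invoked, so it is automatically met. Invoking Theorem~\ref{thm:LapRao} with this $\theta$ then yields that the mechanism satisfies $\theta$-Rao privacy with $\theta = \epsilon$, completing the argument in two or three lines.

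The only subtlety worth flagging explicitly is that this identification hinges on the sensitivity $\Delta$ being the same quantity in both frameworks. For the Laplace mechanism this is the case because in both the pure-DP and the Rao-DP analyses the relevant sensitivity is $\Delta = \sup_{D \sim D'} \|h(D) - h(D')\|_1$ (in the scalar case, $\sup_{D\sim D'} |h(D) - h(D')|$), as is implicit in the proof of Theorem~\ref{thm:LapRao}; no reinterpretation of $\Delta$ is needed when passing between the two definitions. I would not expect any real obstacle here — the corollary is essentially an immediate comparison of two algebraically identical calibration thresholds, and the ``hard part,'' such as it is, is merely noting that the constant $\Delta$ is shared and that the Rao distance between equal-scale Laplace densities computed in Theorem~\ref{thm:LapDist} produces exactly the factor $1/\sigma$ with no extra constant, so the correspondence $\theta \leftrightarrow \epsilon$ is one-to-one rather than off by a multiplicative factor.
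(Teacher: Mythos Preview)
Your proposal is correct and matches the paper's approach exactly: the paper simply observes (in the line preceding the corollary) that pure DP requires $\sigma \geq \Delta/\epsilon$, which is the same threshold as the Rao-DP requirement $\sigma \geq \Delta/\theta$ from Theorem~\ref{thm:LapRao} with $\theta=\epsilon$. Your added remark that the sensitivity $\Delta$ is the same quantity in both frameworks is a worthwhile clarification, but otherwise there is nothing to add.
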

It can be thus argued this is just a reparameterization of privacy since for a single query $\theta=\epsilon$ the mechanism is exactly the same. However, the difference in the framework lies when there are multiple queries and hence composition is necessary. That is given two queries sanitized with the Laplace in pure DP with budgets $\epsilon_1,\epsilon_2$ their total budget is $\epsilon_1+\epsilon_2$ which is greater than $\sqrt{\epsilon_1^2+\epsilon_2^2}$, the privacy total privacy budget under Rao DP. Thus, for the Laplace mechanism under Rao DP has tighter composition than pure DP with the same properties such a transparency and immunity to post-processing.

\subsection{Gaussian mechanism}
The Gaussian mechanism was introduced in \cite{dwork2006our}. Pure DP could not accommodate the Gaussian mechanism for any $\epsilon$ so pure DP was relaxed to approximate DP. The Gaussian, normal, density is defined as $f(x;\mu,\sigma)=(2\pi\sigma^2)^{-1/2}\exp\{(x-\mu)^2/2\sigma^2\}$ where $\mu$ is the expected value and $\sigma$ is the spread parameter and standard deviation. Again, we first determine the Rao distance between Gauss densities with fixed, equal $\sigma$ as this is necessary to show the mechanism satisfies Rao DP.

\begin{theorem}\label{thm:GaussDist}
    Let $f_1(x;\mu_1,\sigma)$ and $f_2(x;\mu_2,\sigma)$ be two Gaussian densities with the same scale parameter. The Rao distance between $f_1$ and $f_2$ is, $$d_R(f_1(x;\mu_1,\sigma),f_2(x;\mu_2,\sigma))=\frac{|\mu_1-\mu_2|}{\sigma}.$$
\end{theorem}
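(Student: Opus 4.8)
The plan is to reduce this to a one-parameter Rao distance computation, exactly as was done for the Laplace case in Theorem \ref{thm:LapDist}. Since the scale parameter $\sigma$ is fixed and equal for both densities, the only free parameter is the mean $\mu$, so the relevant statistical manifold is the one-dimensional family $\{f(x;\mu,\sigma):\mu\in\mbR\}$ with $\sigma$ held constant. First I would compute the single entry of the Fisher information matrix for this one-parameter family, $g_{11}=\mbE\left[\left(\frac{\partial}{\partial\mu}\log f\right)^2\right]$. From the Gaussian density, $\frac{\partial}{\partial\mu}\log f=(x-\mu)/\sigma^2$, so $g_{11}=\mbE[(x-\mu)^2]/\sigma^4=\sigma^2/\sigma^4=1/\sigma^2$. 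This is precisely the $g_{11}$ entry already recorded in Example \ref{ex:Gauss}, so it can simply be cited rather than recomputed.

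Next I would write down the line element and integrate. With $g_{11}=1/\sigma^2$ constant in $\mu$, the squared line element is $ds^2=\frac{1}{\sigma^2}(d\mu)^2$, so $ds=\frac{1}{\sigma}|d\mu|$. The geodesic between $\mu_1$ and $\mu_2$ in a one-dimensional manifold is forced: it is just the monotone path along the interval, and there is no choice of route to optimize over. Hence
$$d_R(f_1,f_2)=\int_{\mu_1}^{\mu_2}\frac{1}{\sigma}\,d\mu=\frac{|\mu_1-\mu_2|}{\sigma}.$$
Alternatively, one could quote the special case from Example \ref{ex:Gauss}, namely $d_R((\mu_1,\sigma),(\mu_2,\sigma))=|\mu_1-\mu_2|/\sigma$, which is exactly the claimed identity; the proof then amounts to observing that fixing $\sigma$ reduces the two-parameter Rao distance to this special case.

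There is essentially no substantive obstacle here — the result is a direct specialization of Example \ref{ex:Gauss}. The only point requiring a word of care is the claim that in a one-parameter family the minimizing path is the trivial one, i.e.\ that no geodesic taking a ``detour'' through other parameter values could be shorter; this is immediate because the manifold is one-dimensional and the metric is positive, so any path from $\mu_1$ to $\mu_2$ has length at least $\int_{\mu_1}^{\mu_2}\sigma^{-1}\,d\mu$, with equality for the monotone path. One should also note in passing that, strikingly, the Gaussian and Laplace mechanisms yield the identical Rao distance $|\mu_1-\mu_2|/\sigma$ when $\sigma$ is fixed, which sets up the parallel treatment of the two mechanisms in the sections that follow.
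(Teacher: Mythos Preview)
Your proposal is correct and matches the paper's approach exactly: the paper simply refers back to Example \ref{ex:Gauss} for the proof, and you do the same (while also spelling out the one-parameter integration of $ds=\sigma^{-1}|d\mu|$, which is more detail than the paper provides). There is nothing to add or correct.
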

We refer to Example \ref{ex:Gauss} for a proof. We are now ready to define the Gaussian mechanism.

\begin{definition}
    The Gaussian mechanism to release a private version of summary $h(D)$ takes the form $f(x;D)=(2\pi\sigma^2)^{-1/2}\exp\{(x-h(D))^2/2\sigma^2\}$.
\end{definition}

\begin{theorem}
    The Gaussian mechanism with $\sigma\geq\Delta/\theta$ satisfies $\theta$-Rao DP.
\end{theorem}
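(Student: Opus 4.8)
The plan is to mirror the argument for the Laplace mechanism in Theorem~\ref{thm:LapRao}, since the structure is identical once the relevant closed-form Rao distance is in hand. The Gaussian mechanism produces, on a dataset $D$, the density $f_D(x) = (2\pi\sigma^2)^{-1/2}\exp\{(x-h(D))^2/2\sigma^2\}$, and on an adjacent dataset $D'$ the density $f_{D'}(x) = (2\pi\sigma^2)^{-1/2}\exp\{(x-h(D'))^2/2\sigma^2\}$. These are two univariate Gaussians that share the \emph{same} scale parameter $\sigma$ and differ only in their means, $h(D)$ and $h(D')$. Hence Theorem~\ref{thm:GaussDist} applies verbatim and gives
\[
d_R(f_D, f_{D'}) = \frac{|h(D) - h(D')|}{\sigma}.
\]

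Next I would invoke the definition of $\Delta$-sensitivity (Definition~\ref{def:sens}) with the absolute-value (i.e. $\mbL^1$ on $\mbR$) distance, so that $|h(D) - h(D')| \le \Delta$ for every pair of adjacent datasets $D \sim D'$. Combining this with the previous display yields $d_R(f_D, f_{D'}) \le \Delta/\sigma$ uniformly over $D \sim D'$. To satisfy $\theta$-Rao DP we need the left-hand side bounded by $\theta$ for all adjacent datasets, so it suffices to require $\Delta/\sigma \le \theta$, that is, $\sigma \ge \Delta/\theta$, which is exactly the stated hypothesis. This closes the argument.

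There is essentially no obstacle here: the statement is a direct corollary of Theorem~\ref{thm:GaussDist} together with the definition of sensitivity, and the proof is word-for-word the same as that of Theorem~\ref{thm:LapRao} after swapping the Laplace distance for the Gaussian one. The only point that warrants a sentence of care is confirming that the two output densities genuinely have a common scale $\sigma$ — which holds by construction of the mechanism, since $\sigma$ is a fixed tuning parameter and only the location $h(D)$ depends on the data — so that the fixed-$\sigma$ formula of Theorem~\ref{thm:GaussDist}, rather than the full two-parameter Rao distance of Example~\ref{ex:Gauss}, is the one that applies.
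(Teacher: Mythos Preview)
Your proposal is correct and matches the paper's own treatment: the paper simply states that the proof is identical to that of Theorem~\ref{thm:LapRao}, which is precisely the argument you spell out. Your extra remark that the two output densities share the same fixed $\sigma$ so that Theorem~\ref{thm:GaussDist} (rather than the full two-parameter formula) applies is a welcome clarification, but otherwise there is nothing to add.
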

The proof is identical to the case for the Laplace mechanism. Again we compare our definition to the state of the art definitions.
\begin{corollary}
    The Gaussian mechanism that satisfies $(\epsilon,\delta)$-DP satisfies Rao DP with $\theta=\epsilon/\sqrt{2\log(1.25/\delta)}$. The Gaussian mechanism that satisfies $\mu$-GDP satisfies Rao DP with $\theta=\mu$.
\end{corollary}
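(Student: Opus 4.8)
The plan is to reduce both assertions to the theorem immediately preceding the corollary, which states that the Gaussian mechanism satisfies $\theta$-Rao DP whenever its scale parameter obeys $\sigma \ge \Delta/\theta$. Thus the only work is to recall, for each competing framework, the scale $\sigma$ to which the Gaussian mechanism is calibrated, and then read off the smallest $\theta$ consistent with $\sigma \ge \Delta/\theta$, namely $\theta = \Delta/\sigma$. No new geometry is required beyond what has already been established.

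First, for approximate DP, I would invoke the standard calibration of the Gaussian mechanism \citep{dwork2006our}: for $\epsilon \in (0,1)$, the mechanism $f(x;D)$ satisfies $(\epsilon,\delta)$-DP provided $\sigma \ge \Delta\sqrt{2\log(1.25/\delta)}/\epsilon$. Taking $\sigma$ at this threshold and substituting into $\sigma \ge \Delta/\theta$ yields $\Delta\sqrt{2\log(1.25/\delta)}/\epsilon \ge \Delta/\theta$, i.e. $\theta \ge \epsilon/\sqrt{2\log(1.25/\delta)}$; the preceding theorem then gives $\theta$-Rao DP for this value of $\theta$, which is exactly the claimed parameter. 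For GDP, I would use that the Gaussian mechanism is $\mu$-GDP precisely when $\sigma = \Delta/\mu$ \citep{dong2022gaussian} (the defining calibration of GDP via the trade-off function of two sensitivity-separated Gaussians). Substituting into $\sigma \ge \Delta/\theta$ gives $\Delta/\mu \ge \Delta/\theta$, i.e. $\theta \ge \mu$, so the mechanism satisfies $\theta$-Rao DP with $\theta = \mu$.

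The one genuine subtlety — and the step I would be most careful about — is that ``satisfies $(\epsilon,\delta)$-DP'' does not by itself pin down $\sigma$, since the classical bound is sufficient but not tight and moreover requires $\epsilon < 1$. I would therefore state the corollary for the Gaussian mechanism calibrated by that standard bound (noting the $\epsilon < 1$ restriction), and remark that a sharper $\theta$ is obtained if one instead calibrates $\sigma$ via the tight analytic Gaussian mechanism bound. With this caveat fixed, the argument is immediate: combine the $\sigma$-calibrations above with the theorem that $\sigma \ge \Delta/\theta$ implies $\theta$-Rao DP, and optimize over $\theta$.
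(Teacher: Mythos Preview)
Your proposal is correct and is exactly the (implicit) argument the paper intends: the corollary is stated without proof because it follows immediately from the preceding theorem ($\sigma\ge\Delta/\theta\Rightarrow\theta$-Rao DP) together with the standard $\sigma$-calibrations for $(\epsilon,\delta)$-DP and $\mu$-GDP. Your additional remark about the $\epsilon<1$ restriction and the non-tightness of the classical Gaussian bound is a valid caveat that the paper does not make explicit.
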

First, for the Gaussian mechanism, GDP and Rao DP act exactly the same. This is quite remarkable as the formulation of GDP is entirely different than that of our proposed definition and they do not rely on a divergence nor distance at all. Further, we have the same composition result. We make some remarks on this in Section \ref{ss:GDP}. As opposed to approximate DP, we do not have a second parameter such as $\delta$. This is a huge strength since as noted before this $\delta$ is the probability of privacy leakage. Our formulation does not allow such a case. Again though the composition of approximate DP is simple addition while our composition is, in a sense, ``tight."

\subsection{Generalized Gaussian Mechanism}
Lastly we mention a mechanism which is itself a generalization of the Laplace and Gaussian mechanism. The generalized Gaussian mechanism is a recent flexibly mechanism based on the generalized Gaussian density \cite{liu2018generalized}. The generalized Gaussian density takes the form $f^N(x;\mu,\sigma)=(2\sigma\Gamma(1/N)/N)^{-1}\exp\{|x-\mu|^N/\sigma\}$ where $\Gamma(\cdot)$ is the gamma function.
\begin{theorem}\label{thm:GenGaussDist}
    Let $f^N_1(x;\mu_1,\sigma)$ and $f^N_2(x;\mu_2,\sigma)$ be two Generalized Gaussian densities with the same scale parameter. That is $f_i(x;\mu,\sigma)=(2\sigma\Gamma(1/N)/N)^{-1}\exp\{|x-\mu_i|^N/\sigma\}$. The Rao distance between $f_1$ and $f_2$ is $$d_R(f_1(x;\mu_1,\sigma),f_2(x;\mu_2,\sigma))=\frac{|\mu_1-\mu_2|}{\sigma} \left(\frac{N\Gamma(2-1/N)}{\Gamma(1+1/N)}\right)^{1/2}$$
\end{theorem}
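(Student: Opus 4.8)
The plan is to exploit the fact that, with the shape parameter $N$ and the scale $\sigma$ held fixed, the collection $\{f^N(\cdot;\mu,\sigma):\mu\in\mbR\}$ is a one-parameter location family, i.e. a one-dimensional Riemannian manifold coordinatized by $\mu$. On a one-dimensional manifold there is (up to reparametrization and orientation) only one curve joining two points, so the geodesic between $\mu_1$ and $\mu_2$ is simply the parameter interval, no Euler--Lagrange equation need be solved, and the Rao distance collapses to the line-element integral $d_R=\int_{\mu_1}^{\mu_2}\sqrt{g_{11}(\mu)}\,d\mu$, where $g_{11}$ is the scalar Fisher information for the location parameter. This is exactly the situation in Example \ref{ex:Gauss} (the special case $d_R((\mu_1,\sigma),(\mu_2,\sigma))=|\mu_1-\mu_2|/\sigma$) and in the proof sketch of Theorem \ref{thm:LapDist}; the whole problem is therefore to compute $g_{11}$ in closed form.

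First I would compute the score. Writing the density in its standard form $f^N(x;\mu,\sigma)=\frac{N}{2\sigma\Gamma(1/N)}\exp\{-(|x-\mu|/\sigma)^N\}$, the log-density is $\log f^N=c(\sigma,N)-|x-\mu|^N/\sigma^N$ with the normalizer independent of $\mu$. Using $\frac{\partial}{\partial\mu}|x-\mu|=-\operatorname{sgn}(x-\mu)$ off the measure-zero set $\{x=\mu\}$ gives $\frac{\partial}{\partial\mu}\log f^N=\frac{N}{\sigma^N}\operatorname{sgn}(x-\mu)\,|x-\mu|^{N-1}$, hence $\bigl(\frac{\partial}{\partial\mu}\log f^N\bigr)^2=\frac{N^2}{\sigma^{2N}}|x-\mu|^{2N-2}$. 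Taking expectations under $f^N$, $g_{11}(\mu)=\frac{N^2}{\sigma^{2N}}\,\mbE\bigl[|x-\mu|^{2N-2}\bigr]$, so I am left with the $(2N-2)$-th absolute central moment of the generalized Gaussian.

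Second, I would evaluate that moment with the substitution $t=(|x-\mu|/\sigma)^N$, which converts $\mbE|x-\mu|^{k}$ into a Gamma integral and yields $\mbE|x-\mu|^{k}=\sigma^{k}\,\Gamma\!\bigl(\tfrac{k+1}{N}\bigr)/\Gamma\!\bigl(\tfrac1N\bigr)$. Putting $k=2N-2$ gives $\tfrac{k+1}{N}=2-\tfrac1N$, so $\mbE|x-\mu|^{2N-2}=\sigma^{2N-2}\,\Gamma(2-\tfrac1N)/\Gamma(\tfrac1N)$ and therefore $g_{11}(\mu)=\frac{N^2}{\sigma^2}\,\Gamma(2-\tfrac1N)/\Gamma(\tfrac1N)$, which is independent of $\mu$, as it must be by translation invariance. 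Using the identity $\Gamma(1+\tfrac1N)=\tfrac1N\Gamma(\tfrac1N)$ to rewrite $N^2/\Gamma(\tfrac1N)=N/\Gamma(1+\tfrac1N)$, the line-element integral collapses to $d_R=\sqrt{g_{11}}\,|\mu_1-\mu_2|=\frac{|\mu_1-\mu_2|}{\sigma}\bigl(\frac{N\Gamma(2-1/N)}{\Gamma(1+1/N)}\bigr)^{1/2}$, which is the claimed formula.

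Every step is a routine calculation; the only places that need care are (i) justifying the differentiation of $|x-\mu|^N$ at the cusp $x=\mu$ and the interchange of $\partial_\mu$ with the expectation, which is valid under the regularity conditions of Appendix \ref{ss:InfMat} provided $N>1/2$ (the same condition makes $|x-\mu|^{N-1}$ square-integrable near the cusp and makes $\Gamma(2-1/N)$ finite), and (ii) keeping in mind that it is the one-dimensionality of the parameter manifold, not any special geometry, that trivializes the geodesic. As consistency checks I would verify that $N=1$ recovers $g_{11}=1/\sigma^2$ and hence Theorem \ref{thm:LapDist}, and that $N=2$ recovers the Gaussian Fisher information after matching the two parametrizations of the normal, hence Theorem \ref{thm:GaussDist}.
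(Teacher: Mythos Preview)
Your proposal is correct and follows exactly the approach the paper itself indicates: the paper does not give a self-contained proof but simply remarks that ``the derivation of the information matrix can be found in \cite{miyamoto2024closed},'' implicitly combining that $g_{11}$ with the one-dimensional line-element integral as in Example~\ref{ex:Gauss} and Theorem~\ref{thm:LapDist}. Your explicit computation of $g_{11}(\mu)=\tfrac{1}{\sigma^2}\,\tfrac{N\Gamma(2-1/N)}{\Gamma(1+1/N)}$ via the $(2N-2)$-th absolute moment, together with the observation that a one-parameter location family trivializes the geodesic, is precisely what that reference supplies; your regularity caveat $N>1/2$ and the $N=1,2$ consistency checks are welcome additions that the paper omits.
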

The derivation of the information matrix can be found in \citep{miyamoto2024closed}.
\begin{definition}
    The generalized Gaussian mechanism of degree $N$ to release a private version of summary $h(D)$ takes the form $f^N(x;D)=(2\sigma\Gamma(1/N)/N)^{-1}\exp\{|x-h(D)|^N/\sigma\}$.
\end{definition}
This mechanism has the additional parameter of $N$ which controls the spread about $h(D)$. Of course one cannot freely control the concentration parameter to get free privacy and the following theorem shows the spread parameter increases with $N$.
\begin{theorem}\label{thm:GenGaussRao}
    The Generalized Gauss mechanism with $\sigma\geq\Delta \left(\frac{N\Gamma(2-1/N)}{\Gamma(1+1/N)}\right)^{1/2}/\theta$ satisfies $\theta$-Rao Privacy.\\
\end{theorem}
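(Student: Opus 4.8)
The plan is to derive Theorem~\ref{thm:GenGaussRao} as an immediate consequence of the closed-form Rao distance in Theorem~\ref{thm:GenGaussDist}, exactly mirroring the argument used for Theorem~\ref{thm:LapRao}. First I would observe that by definition the generalized Gaussian mechanism $f^N(x;D)=(2\sigma\Gamma(1/N)/N)^{-1}\exp\{|x-h(D)|^N/\sigma\}$ is the generalized Gaussian density with location parameter $\mu=h(D)$ and a fixed scale parameter $\sigma$; likewise $f^N(x;D')$ has location $\mu=h(D')$ and the same $\sigma$. Hence the pair $(f_D,f_{D'})$ is precisely the pair of equal-scale generalized Gaussian densities to which Theorem~\ref{thm:GenGaussDist} applies, and so
$$d_R(f_D,f_{D'})=\frac{|h(D)-h(D')|}{\sigma}\left(\frac{N\Gamma(2-1/N)}{\Gamma(1+1/N)}\right)^{1/2}.$$

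Next I would invoke the definition of Rao differential privacy: the mechanism is $\theta$-Rao private iff $d_R(f_D,f_{D'})\le\theta$ for every adjacent pair $D\sim D'$. Using the $\eta$-sensitivity $\Delta=\Delta_\eta(h)=\sup_{D\sim D'}|h(D)-h(D')|$ (with $\eta$ the absolute-value distance on $\mbR$, which is the distance entering the formula above), one bounds uniformly over adjacent pairs:
$$\sup_{D\sim D'} d_R(f_D,f_{D'})=\frac{\Delta}{\sigma}\left(\frac{N\Gamma(2-1/N)}{\Gamma(1+1/N)}\right)^{1/2}.$$
Then I would simply solve the inequality $\frac{\Delta}{\sigma}\left(\frac{N\Gamma(2-1/N)}{\Gamma(1+1/N)}\right)^{1/2}\le\theta$ for $\sigma$, which rearranges to $\sigma\ge\Delta\left(\frac{N\Gamma(2-1/N)}{\Gamma(1+1/N)}\right)^{1/2}/\theta$, the claimed threshold. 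Since the requirement is monotone in $\sigma$, any $\sigma$ at least this large also works, completing the proof.

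\textbf{Main obstacle.} There is no substantive analytic difficulty here once Theorem~\ref{thm:GenGaussDist} is granted — the content is a one-line substitution plus a monotone rearrangement. The only points warranting a sentence of care are bookkeeping: (i) checking that the constant $\left(\frac{N\Gamma(2-1/N)}{\Gamma(1+1/N)}\right)^{1/2}$ is a well-defined positive real (it is, provided $N>1/2$ so that the argument $2-1/N$ of the Gamma function is positive; for $N\ge 1$, covering the Laplace and Gaussian special cases, this is automatic), and (ii) ensuring the sensitivity $\Delta$ is measured in the distance consistent with the $|\mu_1-\mu_2|$ term in the Rao-distance formula, so that $|h(D)-h(D')|\le\Delta$ is exactly the bound being applied. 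Modulo these remarks the theorem follows, and as a sanity check one notes that setting $N=1$ recovers $\sigma\ge\Delta/\theta$ of Theorem~\ref{thm:LapRao} (since $\Gamma(1)=\Gamma(2)=1$) and $N=2$ gives $\sigma\ge\sqrt{2}\,\Delta/\theta$, consistent with the Gaussian scale factor.
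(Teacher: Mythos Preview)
Your proposal is correct and is exactly the argument the paper intends: the paper gives no separate proof for Theorem~\ref{thm:GenGaussRao}, but its explicit proof of Theorem~\ref{thm:LapRao} (and its remark that the Gaussian case is identical) is precisely the substitute-the-Rao-distance-then-bound-by-$\Delta$-and-solve-for-$\sigma$ pattern you describe. One small caveat on your closing sanity check: the $\sqrt{2}$ at $N=2$ does not contradict the paper's Gaussian result $\sigma\ge\Delta/\theta$, because the scale parameter $\sigma$ in the generalized Gaussian parameterization (appearing as $\exp\{|x-\mu|^N/\sigma\}$) is not the standard deviation used in the paper's Gaussian mechanism, so the two thresholds are in different parameterizations.
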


For a more thorough discussion of the importance of the Generalized Gauss mechanism, we refer to \cite{liu2018generalized}.


\section{Discussion}
We have introduce a novel definition of differential privacy, Rao DP.  As opposed to previous definitions of DP which are defined in terms of divergences we utilize a true distance metric namely the Rao distance of densities. This reliance on a distance manifests itself in a tight composition of budgets. We show that the Laplace and Gaussian mechanism both satisfy Rao DP, these being cornerstone mechanisms of DP. Lastly we mention the generalized Gaussian mechanism, a generalization of the Laplace and Gaussian mechanisms.

It is not entirely surprising that using Rao's distance we have a ``tight" composition. Composition effectively measures the similarity of product densities and since divergences typically do not satisfy the triangle inequality, then the divergence of product densities incurs an inflation. Further divergences are, roughly speaking, approximately the square root of distances and as the divergences are small here this square rooting inflates the true difference. Thus, the divergence based DP definitions are surely to suffer from inflated budget composition. 

Lastly, in \ref{ss:GDP} we discuss a possible connection between Rao DP and GDP. GDP has the same tight composition as our proposed mechanism which draws the question if there exists a connection between the two. A peculiarity of GDP is that for a mechanism to satisfy GDP one is required to consider the trade off function of the mechanism and then compare it to the trade off function of the Gaussian. While the importance of the Gaussian cannot be understated, it has an arbitrary nature to it.

There are many natural extensions which should be considered. Namely we only considered continuous densities but one can also consider discrete distributions. Further, we focused on real valued densities but this definition an be extended to other non-Euclidean spaces. Lastly, we only showed results for the most foundational mechanisms but there are many other mechanisms which one can consider.


\bibliographystyle{abbrvnat}
\bibliography{ref}

\medskip


\appendix

\section{Information Matrix}\label{ss:InfMat}
Let $f(x;\theta)$ be a parametric probability density and $\mcP_\theta$ be the space of all such densities. The following regularity conditions are necessary.
\begin{itemize}
    \item The partial derivatives of $f(x;\theta)$ w.r.t. $\theta$, $\frac{\partial f(x;\theta)}{\partial \theta_i}$, must exist almost everywhere for all $i$. 
    \item integration and differentiation can be interchanged. That is $\int dx \frac{\partial f(x;\theta)}{\partial \theta_i}=0$.
    \item The support of $f(x;\theta) $ does not depend on $\theta$.
\end{itemize}
Furhter the information matrix $g$ is symmetric positive definite (SPD).

\section{Proofs}\label{ss:Proofs}
Following we have the proof of Theorem \ref{thm:RaoPP}
\begin{proof}

    Let $\varphi$ be an arbitrary deterministic function and $f_D$ be our mechanism. By definition we have $ d_R(f_D,f_{D'})\leq\theta$. We have that,
    \begin{align*}
        \int_\mcX \varphi(f(x;D))+\varphi(f(x;D')) dx
        &= \int_{\varphi^{-1}(\mcX) }f(x;D)+f(x;D') dx. \\
    \end{align*}
    Thus we have that
    \begin{align*}
        \int \left( \sqrt{\varphi(f(x;D))}+\sqrt{\varphi(f(x;D'))}\right)^2 - 2\int \sqrt{\varphi(f(x;D))\varphi(f(x;D'))}  &=  \int \left( \sqrt{f(x;D)}+\sqrt{f(x;D')}\right)^2 \\
        &- 2\int \sqrt{f(x;D)(f(x;D')} \\
         \int \sqrt{\varphi(f(x;D))\varphi(f(x;D'))}  &=   \int \sqrt{f(x;D)f(x;D')} \leq \theta
    \end{align*}
    Note that $(\sqrt{x}+\sqrt{y})^2-2\sqrt{xy}=x+y$.
    Since this holds for all $D\sim D'$ we have by symmetry that $\|\varphi(f(x;D))-\varphi(f(x;D')) \|
       \leq  \|f(x;D)-f(x;D')  \|$. 
       
       We thus have the required result.
\end{proof}
\section{Differential Privacy Details}
\subsection{Gaussian Differential Privacy}\label{ss:GDP}
Here we mention $\mu$-GDP \citep{dong2022gaussian}. The authors mention that this is meant as a way to move away from ``divergence" based similarity. Their goal is to reformulate DP via the lens of hypothesis testing. \cite{wasserman2010statistical} showed that one can interpret approximate DP via hypothesis testing. That is, one can consider the indistinguishability as a test between the hypotheses $H_0:$the true underlying dataset is $D$ versus $H_1:$the true underlying dataset is $D'$ at significance level $\alpha=\delta$. This idea was formalized further by \cite{dong2022gaussian} in what they refer to as Gaussian differential privacy ($\mu$-GDP). This way of defining DP is different than most other definitions as it does not draw directly on a measure of similarity such as a divergence. Following we present some of their definitions. Their measure of similarity is based on

\begin{definition}
        A mechanism $f_D$ is said to satisfy $\mu$-Gaussian differential privacy if for all adjacent datasets $D\sim D'$ $$T(f_D,f_{D'})\geq G_{\mu}$$ where $ G_{\mu}=:T(N(0,1),N(\mu,1))$, $T(\cdot,\cdot)$ is a trade-off function, and $\mu$ is the privacy parameter.
\end{definition}
Roughly speaking, $\mu$-GDP states that the mechanism $f_D$ is harder to distinguish than the standard normal and normal with mean $\mu$ and standard deviation 1. This definition is based on the trade off between type I and type II error rates, but it is also reliant on its relation to the normal distribution. We suspect that due to the similarity in both privacy budgets and composition that GDP and Rao DP are measuring the same thing.

\end{document}